\documentclass[11pt, a4paper]{article}
\usepackage[margin=1.25in, top=1.4in, headsep=0.4in]{geometry}
\usepackage{amsmath, amssymb,amsthm,amsfonts,hyperref}
\usepackage{authblk}  
\usepackage[usenames]{xcolor} 
\usepackage{url}
\usepackage{float}
\usepackage{mathrsfs}
\usepackage{enumitem} 
\usepackage{dsfont} 

\usepackage{enumitem}
\setlist{nolistsep}
\usepackage{tikz}
\usetikzlibrary{shapes.misc}
\usetikzlibrary{backgrounds,calc,angles,positioning,intersections,quotes,decorations.markings}
\usetikzlibrary{arrows.meta} 
\usepackage{tkz-euclide}
\usepackage[noend]{algpseudocode}
\usepackage{algorithm}
\usepackage{scalerel,stackengine} 
\usepackage{pgfplots}
\usepackage{pgfplotstable}
\usepackage{subcaption}
\def\l(#1,#2]{\left(#1\,\mathpunct{\ldotp\ldotp}#2\right]} 
\def\o(#1,#2){\left(#1\,\mathpunct{\ldotp\ldotp}#2\right)} 
\def\c[#1,#2]{\left[#1\,\mathpunct{\ldotp\ldotp}#2\right]} 
\def\[#1,#2){\left[#1\,\mathpunct{\ldotp\ldotp}#2\right)}  
\mathchardef\ldotp="613A   
\newcommand{\dd}{\mathbin{\ldotp\!\ldotp}}
\def\:{{\vert\mskip1mu}} 
\def\]{{\downarrow}\mskip1mu} 

\def\(#1){^{\mskip1mu(#1)}}
\def\minus{
  \setbox0=\hbox{-}
  \vcenter{%
    \hrule width\wd0 height \the\fontdimen8\textfont3%
  }%
}
\newcommand{\HY}{H$\mkern.5mu-\mkern-1.5mu$Y}

\newcommand\encircle[1]{%
  \tikz[baseline=(X.base)] 
    \node (X) [draw, shape=circle, inner sep=0] {\strut #1};}

\newcommand\encircled[1]{%
  \tikz[baseline=(X.base)]
    \node (X) [draw, dashed, shape=circle, inner sep=0] {\strut #1};}

\newcommand\encircledd[1]{%
   \tikz[baseline=(X.base)]
    \node (X) [draw, loosely dashdotted, shape=circle, inner sep=0] {\strut #1};}

\newcommand{\bigO}{\mathcal{O}}
\tikzstyle{loosely dashdotted}=      [dash pattern=on 3pt off 4pt on \the\pgflinewidth off 4pt]

\newtheorem{thmx}{Theorem}
\newtheorem{lem}[thmx]{Lemma}
\newtheorem{cor}[thmx]{Corollary}
\newtheorem{remark}[thmx]{Remark}

\newtheorem{prop}{Proposition}


\newcommand{\nonextantP}[1][]{
  \tikz[] \node[myC, #1]{};}
\tikzset{
  o/.style={shape=circle, draw, inner sep=+0pt, minimum size=+2mm},
  myC/.style={o, fill, fake double={o}{+3mm}},
}

\newdimen\unit \unit=1.5em
\def\intpicd #1[#2..#3)#4\\{\hbox{\kern#2\unit\llap{$#1$\ \ }%
 \hbox to0pt{\hss$\bullet$\hss}%
 \dimen0=#3\unit \advance\dimen0-#2\unit
 \hbox to\dimen0{\cleaders\hbox{$-$}\hfil
   }
 \hbox to0pt{\hss$\circ$\hss} \ $#4$}}

\newdimen\unit \unit=1.5em
\def\intpicb #1[#2..#3)#4\\{\hbox{\kern#2\unit\llap{$#1$\ \ }%
 \hbox to0pt{\hss$\bullet$\hss}%
 \dimen0=#3\unit \advance\dimen0-#2\unit
 \hbox to\dimen0{\cleaders\hbox{$\odot$}\hfil
   }
 \hbox to0pt{\hss$\circ$\hss} \ $#4$}}

\newdimen\unit \unit=1.5em
\def\intpicx #1[#2..#3)#4\\{\hbox{\kern#2\unit\llap{$#1$\ \ }%
 \hbox to2.5pt{\hss$\bullet$\hss}%
 \dimen0=#3\unit \advance\dimen0-#2\unit
 \hbox to\dimen0{\cleaders\hbox{$\cdot$}\hfil
   }
 \hbox to2.5pt{\hss$\circ$\hss} \ $#4$}}

\newdimen\unit \unit=1.5em
\def\intpicy #1[#2..#3)#4\\{\hbox{\kern#2\unit\llap{$#1$\ \ }%
 \hbox to0pt{\hss$\bullet$\hss}%
 \dimen0=#3\unit \advance\dimen0-#2\unit
 \hbox to\dimen0{\cleaders\hbox{$\mkern-2mu\smash-\mkern-2mu$}\hfil
   $\mkern-7mu \smash- \mkern1.5mu$}%
 \hbox to0pt{\hss$\circ$\hss} \ $#4$}}


\title{A Note on Asynchronous Challenges: Unveiling Formulaic Bias and Data Loss in the Hayashi$\mkern.1mu-\mkern-1.5mu$Yoshida Estimator}
\author{Evangelos Georgiadis\thanks{Theory Group, CQuant Technologies Limited, The Hong Kong Science \& Technology Park (HKSTP), Hong Kong. Email:\texttt{egeorg\text{@}cquant\text{.}xyz}}}

\date{\today}



\begin{document}
\maketitle
\begin{abstract}

The Hayashi$\,\minus$Yoshida (\HY)-estimator exhibits an intrinsic, telescoping property
that leads to an often overlooked computational bias, which we denote, {\it formulaic or intrinsic bias}. 
This {\it formulaic bias} results in data loss by cancelling out 
potentially relevant data points, the {\it nonextant data points}. This paper attempts to formalize 
and quantify the data loss arising from this bias.
In particular, we highlight the existence of {\it nonextant data points} via a concrete example, 
and prove necessary and sufficient conditions for the telescoping property to induce this 
type of {\it formulaic bias}.

Since this type of bias is nonexistent when inputs, i.e., observation times,
$\Pi^{(1)} :=(t_i^{(1)})_{i=0,1,\ldots}$ and $\Pi^{(2)} :=(t_j^{(2)})_{j=0,1,\ldots}$, are synchronous,
we introduce the $(a,b)$-{\it asynchronous adversary}. This adversary
generates inputs  $\Pi^{(1)}$ and $\Pi^{(2)}$  according to two independent homogenous Poisson processes
with rates $a>0$ and $b>0,$ respectively. We address the foundational questions regarding
cumulative minimal (or least) average data point loss, and determine the values for $a$ and $b$.

We prove that for equal rates $a=b,$ the minimal average cumulative data loss
over both inputs is attained and amounts to $25\%$.
We present an algorithm, which is based on our theorem, for computing the {\it exact number of nonextant data points}
given inputs $\Pi^{(1)}$ and $\Pi^{(2)},$ and suggest alternative methods.
Finally, we use simulated data to empirically compare the (cumulative) average data loss of the (\HY)-estimator.
\end{abstract}

\section{Introduction}\label{intro}


In \cite{HY-2005}, Hayashi and Yoshida proposed an estimator
for the (cumulative) covariance of two diffusion processes
when those are observed at only discrete times in an asynchronous setting. 
More specifically, Hayashi and Yoshida proposed an estimator that natively computes 
on irregular and asynchronous observations.

While the novelty of the Hayashi\,-Yoshida (\HY)-estimator is manifold,
the two key value propositions of their estimator, are, its ability to
\begin{itemize}
\item compute natively on asynchronous data (or observations) --
 avoiding any synchronization, preprocessing that might introduce bias (along with potential data loss) 
\item utilize all available data (or observations)\footnote{assuming no sampling process is employed, 
and all available data points are used.}.
\end{itemize}
\bigskip

Chronologically and conceptually, the (\HY)-estimator represents a third generation statistical
toolkit for analyzing asynchronous high-frequency data,e.g., financial market data.
This estimator, radically, supersedes, its first and second generation predecessors, which
were largely based on conceptually simplistic, if not merely flawed, model assumptions. 
The first generation laid its foundation on the (implicit and very strong) assumptions 
that data was generated by an underlying process exhibiting properties of synchronicity 
and equidistant regularity. The second generation, attempted to rectify these
strong assumptions imposed on real-world data by providing an intermediate 
computational preprocessing layer between raw data and estimator; namely, 
an artificial way to synchronize raw data. Arguably, this attempt turned out to be a 
quick fix solution, introducing a variety of different types of biases, known as 
{\it synchronization bias, imputation bias, etc...}; types which we refer to as 
belonging to the class of {\it extrinsic bias}.\footnote{i.e., bias induced via preprocessing
methods that previous generations of estimators suffered from.}

The first attempt towards a third generation statistical estimator that natively computes on
asynchronous data was devised by Jong and Nijman in~\cite{JN-1997}. 
The estimator avoids arbitrary imputation methods and utilizes all available
transaction data to compute consistent covariance estimates. 
Their approach follows a regression based methodology, and can be viewed as a 
generalization of results in~\cite{CHMSW-1983}, which discusses a related 
problem of {\it intervalling effect bias}. A non-regression based approach with a
continuous time setting, unlike~\cite{JN-1997}, was then proposed by
Hayashi and Yoshida in~\cite{HY-2005}, following further investigations
and generalizations in~\cite{HK-2008},~\cite{HY-2008} and~\cite{HY-2011}.
Despite its nice properties, i.e., being consistent,
asymptotically normally distributed, and, in particular, 
absent from extrinsic bias, the \HY\,estimator
suffers from an intrinsic bias,\footnote{i.e., bias directly associated with the evaluation or computation of the estimator.}
which can be attributed to the telescoping property of its
summation formula. This {\it formulaic bias}, under certain
conditions, leads to data points being completely cancelled out
during computation, which we call {\it nonextant data points.}
These are data points that do not impact the output of the estimator.
Or more concretely, the output of the estimator does not depend on the
value of the functions at those points.


The study of {\it nonextant data points} for an estimator is of pivotal significance.
For one, it provides insights into the inherent limitations of an estimator's
ability to compute on inputs, i.e., addressing the foundational question of
whether an estimator can make use of all available data points that it
is provided with, without (intrinsically) discarding data points in way that do not influence
the output -- leading to potential {\it information loss}. This, in turn, opens up
avenues to formalize an alternative information-theoretic metric of an
estimator's efficiency (to obtaining the true value) with respect to its intrinsic (or formulaic) ability to utilize all inputs.
For another, it provides additional insight of an estimator's {\it breakdown point analysis}
along with its {\it robustness properties}.
The {\it breakdown point} of an estimator, ``is, roughly, the smallest amount
of contamination that may cause an estimator to take on arbitrarily large
aberrant values''~\cite{DN-1983}; or more abstractly, ``the natural
notion that quantifies the effect (or influence) of the outliers on its performance''~\cite{DK-2023}.
In fact, understanding how {\it nonextant data points} occur and whether they follow some form of pattern,
or occur within specific intervals, links to the notion of
{\it stochastic breakdown}, as outlined in (Donoho and Huber)~\cite[Section 5.1 on page 178]{DN-1983},
where contamination arranged in certain patterns is not effective at all at disturbing or impacting
the output of an estimator, than, let's say, contamination that is randomly placed among the data.
This illustrates a deficiency, or, mildly put, property, of an estimator
that an adversary could use to compromise (or mislead) the estimator. For instance,
by enforcing a configuration (or situation) where certain data points, that need to be obfuscated,
fall into patterns that do not impact the output of estimator.
In the context of a naive, direct application of the (\text{\HY})-estimator~(\ref{org:def}),
which is used to quantify a lead-lag relationship,
the existence of {\it nonextant data points} highlights
inherent limitations in its ability to quantify or capture granular (lead-lag) variations.
This property, also opens up potential avenues for an adversary to compromise
the estimator, by rendering its output to be misleading. For example, an adversary that wants
to obfuscate certain data points in the presence of a (\text{\HY})-analysis, might enforce
a configuration in which these data points cancel out without impacting the output of the estimator. 
To this end, a thorough and rigorous analysis of this estimator is of interest to
regulators and regulatory institutions, such as the U.S. Securities and Exchange Commission (SEC),
whose goal is to detect and prevent market manipulation. We note that this estimator has 
extensively been deployed as the chosen statistical toolkit to consult the SEC on certain market activities of exchanges,
due to its characteristics of being {\it (supposedly) free from any bias and able to handle asynchronicity},
as seen from recent SEC related filings~\cite[page 5536]{SEC-2022}.
Last but not least, this study provides a reference point to compare
{\it average data loss} arising from classical, extrinsic bias, such as {\it synchronization and imputation bias},
to {\it average data loss} arising from our {\it intrinsic or formulaic bias}.

Hence, this note serves as a natural first step towards understanding this {\it formulaic bias},
which involves analyzing the conditions that induce {\it nonextant data points},
providing an algorithm to compute the {\it exact} number of nonextant data points,
along with addressing the question of estimating the least (cumulative) average data loss
under real world type inputs, i.e.  inputs that are asynchronous.

\subsection{Structure of paper}

We highlight via concrete example an arguably counterintuitive {\it bias} in the
(\HY)-estimator which can be attributed to the intrinsic, telescoping property of its summation formula.
We show that this {\it formulaic bias} induces data points (or observations) 
to be cancelled out completely during the computation, which we denote as {\it nonextant data points}.
Furthermore, we observe that these data points do not influence the output of the estimator.
We note that {\it formulaic bias} exhibits similar characteristics to those 
resulting from {\it synchronization or imputation bias}. 

Second, we characterize {\it nonextant data points}, illustrate and prove the conditions
needed for {\it nonextant data points} to arise. Our theorem is sufficiently conducive
to employ as decision procedure for deciding whether a data point is nonextant.

Third, we derive an expression for the expected number of {\it nonextant data points} assuming that
inputs are generated by our $(a,b)$-asynchronous adversary.

Finally, we present two different algorithms to compute the exact number of {\it nonextant data points}.
We employ a Monte Carlo style exploration to empirically investigate (cumulative) data loss 
as the number of time units for the Poisson processes increases, over varying rates. 
For each instance of varying rates, we generate a fixed number of random samples and compute
the data loss.
The results obtained provide a reference point for future iterations of our work and provide
a basis for comparison to our theoretically obtained results.

\section{Preliminaries: definitions, remarks and examples}\label{build:def}

\subsection{Our version}\label{our:def}

In this section, we are building up definitions to construct expression~\eqref{our:def}, 
the equivalent, original \HY\,formula\footnote{without the sampling mechanism, as we compute on all available data}.
For interval notation, we adopt the minimalized variant of that pioneered by C.~A.~R. Hoare in his note~\cite[p.~337]{H-1972}. 
The original notation had three dots; it was then modified by Lyle~H.~Ramshaw, and popularized in~\cite{GKP-1994}. 

Let $\[a,b)$ be a half-open interval, denoting the set of real numbers $x$ in the range $a \leq x < b$. Similar notations apply to open intervals $\o(a,b)$,
closed intervals $\c[a,b]$ and half-open intervals that include the right endpoint but not the left, $\l(a,b].$ 

We fix $T \in \o(0,\infty)$ to be an arbitrary terminal time for observing\footnote{i.e., we record changes in prices instantaneously.} 
prices of two securities, $P^{(l)}$, where $l=1,2.$ Accordingly, we define, $\Pi^{(1)} :=(t_i^{(1)})_{i=0,1,\ldots,M^{(1)}}$ 
and $\Pi^{(2)}:=(t_j^{(2)})_{j=0,1,\ldots,M^{(2)}},$ to be increasing sequences of random observation times such that 
$t_0^{(1)}, t_0^{(2)} \geq 0$ and $t_{M^{(1)}}^{(1)}, t_{M^{(2)}}^{(2)} \leq T.$ We will sometimes refer to observation times as points.
We note that the estimator cannot natively compute on multiplicities without preprocessing. Additionally, for our work,
we assume that with probability $1$ all points in both $\Pi^{(1)}$ and $\Pi^{(2)}$ are distinct,
by invoking our {\it asynchronous adversary} to generate completely asynchronous inputs.
Additionally, we define the intervals between successive observation times as $I_i^{(1)}:= \left(t_{i-1}^{(1)} \dd t_i^{(1)} \right]$ with 
$I^{(1)} := \bigcup_{i=1}^{\left(M^{(1)}\right)}(I_i^{(1)}),$ and accordingly $I_j^{(2)}:= \left(t_{j-1}^{(2)} \dd t_j^{(2)} \right]$  with 
$I^{(2)} := \bigcup_{j=1}^{\left(M^{(2)}\right)}(I_j^{(2)}).$ 
For convenience, we let $t_0^{(1)}\land t_0^{(2)}:= \min{(t_0^{(1)},t_0^{(2)})}$ and $t_0^{(1)}\lor t_0^{(2)}:= \max{(t_0^{(1)},t_0^{(2)})}.$
Thus the covariance between $P^{(1)}$ and $P^{(2)}$  over period 
$|I^{(1)} \cap I^{(2)}| = (t_0^{(1)} \lor t_0^{(2)} \dd t_{M^{(1)}}^{(1)} \land t_{M^{(2)}}^{(2)}],$ is provided via

\begin{align}
{\left<P_{\Pi^{(1)}}^{(1)}, P_{\Pi^{(2)}}^{(2)}\right>}_{\text{\HY}}&:= \sum_{j=1}^{\left(M^{(2)}\right)} \sum_{i=1}^{\left(M^{(1)}\right)} \left( P_{t_i^{(1)}}^{(1)} -  P_{t_{i-1}^{(1)}}^{(1)} \right) \left(P_{t_j^{(2)}}^{(2)} - P_{t_{j-1}^{(2)}}^{(2)}\right)\mathds{1}_{\left\{ \l(t_{i-1}^{(1)}, t_{i}^{(1)}] \cap \l(t_{j-1}^{(2)},t_{j}^{(2)}]\neq\emptyset\right\}} \nonumber \\
&\;= \sum_{i,j} \left(\Delta P_i^{(1)} \right) \left( \Delta P_j^{(2)} \right)\mathds{1}_{\{ I_i^{(1)} \cap I_j^{(2)}\neq\emptyset\}},\label{our:def:form}
\end{align}

where $\Delta P_i^{(1)}:=  P_{t_{i}^{(1)}}^{(1)} - P_{t_{i-1}^{(1)}}^{(1)}$ and $ \Delta P_j^{(2)} := P_{t_{j}^{(2)}}^{(2)} - P_{t_{j-1}^{(2)}}^{(2)}.$ 

\subsection{Original version}\label{org:def}

The original definition of the (cumulative) covariance estimator is
presented in~\cite[Definition~3.1, on page 368]{HY-2005} and outlined
for completeness and convenience below. Note that our expression in (\ref{our:def:form}) is
equivalent to (\ref{eqn:original}).
\begin{equation}\label{eqn:original}
\sum_{i,j}\Delta P^{(1)}(I^i)\Delta P^{(2)}(J^j)\mathds{1}_{\{I^i \cap J^j\neq\emptyset\}},
\end{equation}
 
where, $P^{(1)}(I^i)$ is essentially our $P_i^{(1)}$, and $P^{(2)}(J^j)$  is essentially our $P_j^{(2)}$; 
additionally, we use $I_i^{(1)}$ and $I_j^{(2)}$ in place of $I^i$ and $J^j,$ respectively.

\begin{remark}
At first inspection, \eqref{our:def:form} reveals that 
contributions to the sum depend on the indicator function. 
That is, the only contributions of the product of any pair of
(successive) differences $\Delta P_i^{(1)} \Delta P_j^{(2)}$
to the sum occur only when the respective observation intervals
$I_i^{(1)}$ and $I_j^{(2)}$ overlap. 

A deeper inspection of \eqref{our:def:form} reveals that due to
the telescoping nature of the summation not all of the 
{\it observations or data points} actually end up counting towards the sum.
Cancellation enters the computation, resulting in observations or data points
being {\it intrinsically cancelled out}. The cancelled out data points are our 
{\it nonextant data points}. To illustrate instances that generate {\it nonextant data points},
we provide the following concrete example below~\ref{example:concrete}.
\end{remark}
\begin{remark}
Further, a word on interval type. The \HY\, formula is based on {\it half-open} intervals
that include right endpoints, in particular, 
$\mathds{1}_{\left\{ \l(t_{i-1}^{(1)}, t_{i}^{(1)} ] \bigcap \l(t_{j-1}^{(2)}, t_{j}^{(2)}]\neq\emptyset\right\}}.$  
Since the phenomenon of cancellation and thus nonextant data points is invariant to whether 
intervals are half-open to the right or left, and since half-open intervals that include
the left endpoints are slightly more common and slightly more convenient to compute with, we could conduct 
our analysis and computation on half-open intervals that include the left endpoint.
\end{remark}

\subsection{Concrete example}\label{example:concrete}

Suppose two securities have the following observation times (in units of time,e.g., seconds) and prices (in USD denomination).
We let terminal time, $T = 12,$ and have observation times $\Pi^{(1)} = (2,3,4,5,7,8,11.5),$ with respectively observed prices 
$P^{(1)} = (10,15,25,10,5,1,5),$ and $\Pi^{(2)} = (1,6,9,10,11,12),$ with respectively observed prices $P^{(2)} = (5,10,15,20,25,20).$ 
To put our notation into use, the third\footnote{Note, the index starts at $0$.} observation time of $P^{(1)}$ is $t_2^{(1)} = 4$, and thus, 
the price of $P^{(1)}$ at the third observation time is $P_{t_2^{(1)}}^{(1)} = 25.$
The third\footnote{Note, the index starts at $1$.} observation interval of $P^{(1)}$ is $I_3^{(1)}.$
Hence,  $I_3^{(1)} = \l(t_{2}^{(1)},t_{3}^{(1)}] = \l(4,5].$
The following figure illustrates observation times $\Pi^{(1)}$ and $\Pi^{(2)}.$


\tikzset{cross/.style={cross out, draw=black, minimum size=2*(#1-\pgflinewidth), inner sep=0pt, outer sep=0pt},
cross/.default={1pt}}
\begin{figure}[!htp]
\begin{center}
\hspace{-.75cm}
\begin{tikzpicture}
\draw   (0,1) -- (12,1) 
	(0,.9) -- (0.,1.1) 
	(12,.9) -- (12,1.1) 
	(0,-1) -- (12,-1) 
	(0, -0.9) -- (0,-1.1) 
	(12,-0.9) -- (12,-1.1); 
\draw[dashed,blue] (2.5,1) -- (2.5,0.4);
\draw[dashed,blue] (2.5,-0.2) --(2.5,-.5) -- (3.5,-1);
\draw[dashed,blue] (3.515,1) -- (3.515,0.4);
\draw[dashed,blue] (3.515,-0.2) --(3.515,-.5) -- (3.5,-1);
\draw[dashed,blue] (4.515,1) -- (4.515,0.4);
\draw[dashed,blue] (4.515,-0.2) --(4.515,-.5) -- (3.5,-1);
\draw[dashed,blue] (6.015,1) -- (5.515,.8) -- (5.515,0.4);
\draw[dashed,blue] (5.515,-0.2) --(5.515,-.5) -- (3.5,-1);
\draw[dotted,blue] (6.015,1) -- (6.515,.8) -- (6.515,0.4);
\draw[dotted,blue] (6.515,-0.2) --(6.515,-.5) -- (7.5,-1);
\draw[dotted,blue] (7.515,1) -- (7.515,0.4);
\draw[dotted,blue] (7.515,-0.2) --(7.515,-.5) -- (7.5,-1);
\draw[dotted,blue] (9.7515,1) -- (8.515,.8) -- (8.515,0.4);
\draw[dotted,blue] (8.515,-0.2) --(8.515,-.5) -- (7.5,-1);
\draw[densely dashdotted,blue] (9.7515,1) --(9.515,.8) -- (9.515,0.4);
\draw[densely dashdotted,blue] (9.515,-0.2) --(9.515,-.5) -- (9.5,-1);
\draw[densely dotted,blue] (9.7515,1) -- (10.515,0.8) -- (10.515,0.4);
\draw[densely dotted,blue] (10.515,-0.2) --(10.515,-.5) -- (10.5,-1);
\draw[loosely dashdotted,blue] (9.7515,1) -- (11.28,.8) -- (11.28,0.4);
\draw[loosely dashdotted,blue] (11.28,-0.2) --(11.28,-.5) -- (11.5,-1);
\foreach \Point in {(2,1), (3,1), (4,1), (5,1), (7,1), (8,1), (11.5,1)}{
    \node at \Point {\textbullet};}
\foreach \Point in { (1,-1.01), (6,-1.01), (9,-1.01), (10,-1.01), (11,-1.01), (12,-1.01)}{
    \node at \Point {\textbullet};} 
\draw[decorate,decoration={brace,amplitude=4pt}]
	(2,1.01)--(3.,1.01) node[midway, above,yshift=5pt,]{\small{$|I_{1}^{1}|$}};
\draw[decorate,decoration={brace,amplitude=4pt, mirror}]
        (4.01,1.01)--(3.02,1.01) node[midway,above,yshift=5pt,]{\small{$|I_{2}^{1}|$}};
\draw[decorate,decoration={brace,amplitude=4pt, mirror}]
        (5.01,1.01)--(4.02,1.01) node[midway,above,yshift=5pt,]{\small{$|I_{3}^{1}|$}};
\draw[decorate,decoration={brace,amplitude=4pt, mirror}]
        (7.01,1.01)--(5.02,1.01) node[midway,above,yshift=5pt,]{\small{$|I_{4}^{1}|$}};
\draw[decorate,decoration={brace,amplitude=4pt, mirror}]
        (8.01,1.01)--(7.02,1.01) node[midway,above,yshift=5pt,]{\small{$|I_{5}^{1}|$}};
\draw[decorate,decoration={brace,amplitude=4pt, mirror}]
        (11.5,1.01)--(8.02,1.01) node[midway,above,yshift=5pt,]{\small{$|I_{6}^{1}|$}};
\draw[decorate,decoration={brace,amplitude=4pt}]
       (6,-1.6)--(1.,-1.6) node[midway, below,yshift=-5pt,]{\small{$|{I}_{1}^{2}|$}};
\draw[decorate,decoration={brace,amplitude=4pt}]
       (9,-1.6)--(6.02,-1.6) node[midway, below,yshift=-5pt,]{\small{$|{I}_{2}^{2}|$}};
\draw[decorate,decoration={brace,amplitude=4pt}]
       (10,-1.6)--(9.02,-1.6) node[midway, below,yshift=-5pt,]{\small{$|{I}_{3}^{2}|$}};
\draw[decorate,decoration={brace,amplitude=4pt}]
       (11,-1.6)--(10.02,-1.6) node[midway, below,yshift=-5pt,]{\small{$|{I}_{4}^{2}|$}};
\draw[decorate,decoration={brace,amplitude=4pt}]
       (12,-1.6)--(11.02,-1.6) node[midway, below,yshift=-5pt,]{\small{$|{I}_{5}^{2}|$}};
\draw[decorate,decoration={brace,amplitude=4pt}]
	(3,0.45)--(2,0.45) node[midway,below,yshift=-5pt,]{\tiny{$\big|\! {I}_{1}^{1}\!\cap\!{I}_{1}^{2}\!\big|$}};	
\draw[decorate,decoration={brace,amplitude=4pt}]
	(4.0,0.45)--(3.05,0.45) node[midway, below,yshift=-5pt,]{\tiny{$\big|\! {I}_{2}^{1}\!\cap\!{I}_{1}^{2}\!\big|$}};
\draw[decorate,decoration={brace,amplitude=4pt}]
	(5.0,0.45)--(4.05,0.45) node[midway, below,yshift=-5pt,]{\tiny{$\big|\!{I}_{3}^{1}\!\cap\!{I}_{1}^{2}\!\big|$}};
\draw[decorate,decoration={brace,amplitude=4pt}]
	(6,0.45)--(5.05,0.45) node[midway, below,yshift=-5pt,]{\tiny{$\big|\!{I}_{4}^{1}\!\cap\!{I}_{1}^{2}\!\big|$}}; 
\draw[decorate,decoration={brace,amplitude=4pt}]
        (7,0.45)--(6.05,0.45) node[midway, below,yshift=-5pt,]{\tiny{$\big|\!{I}_{4}^{1}\!\cap\!{I}_{2}^{2}\!\big|$}}; 
\draw[decorate,decoration={brace,amplitude=4pt}]
	(8.,0.45)--(7.05,0.45) node[midway, below,yshift=-5pt,]{\tiny{$\big|\!{I}_{5}^{1}\!\cap\!{I}_{2}^{2}\!\big|$}};
\draw[decorate,decoration={brace,amplitude=4pt}]
	(9.,0.45)--(8.05,0.45) node[midway, below,yshift=-5pt,]{\tiny{$\big|\!{I}_{6}^{1}\!\cap\!{I}_{2}^{2}\!\big|$}};
\draw[decorate,decoration={brace,amplitude=4pt}]
        (10.,0.45)--(9.05,0.45) node[midway, below,yshift=-5pt,]{\tiny{$\big|\!{I}_{6}^{1}\!\cap\!{I}_{3}^{2}\!\big|$}};
\draw[decorate,decoration={brace,amplitude=4pt}]
        (11.,0.45)--(10.05,0.45) node[midway, below,yshift=-5pt,xshift=-1pt]{\tiny{$\big|\!{I}_{6}^{1}\!\cap\!{I}_{4}^{2}\!\big|$}};
\draw[decorate,decoration={brace,amplitude=3.5pt}]
        (11.5,0.45)--(11.05,0.45) node[midway, below,yshift=-5pt,xshift=2pt]{\tiny{$\big|\!{I}_{6}^{1}\!\cap\!{I}_{5}^{2}\!\big|$}};
\draw (0.,1) node[left,xshift=-2pt]{$\Pi^{1}$}
(0.,-1) node[left,xshift=-2pt]{$\Pi^{2}$};
\draw (0.,0.25) node[below,yshift=-1pt]{\small $0$} 
(12.,0.25) node[below,yshift=-1pt,xshift=9pt]{\small $T\!\!=\!\!12$}; 
\draw[densely dotted] (0,1) -- (0,.15)
              (0,-1) -- (0,-.2)
              (12,1) -- (12,.15)
              (12,-1) -- (12,-.2);
\draw[densely dashed] (12,1) -- (13,1)
              (12,-1) -- (13,-1);
\foreach \x / \y in {0/{2},1/{3},2/{4},3/{5},4/{7},5/{8},6/{11.5}}
   \node[below, yshift=-.4pt] at (\y,1) {\small $t_{\x}^1$};
\foreach \x / \y in {0/{1},1/{6},2/{9},3/{10},4/{11},5/{12}}
   \node[below, yshift=-.4pt] at (\y,-1) {\small $t_{\x}^2$};
\end{tikzpicture}
\end{center}
\caption{Visualizing notation in use for ${\left<P_{\Pi^{(1)}}^{(1)}, P_{\Pi^{(2)}}^{(2)}\right>}_{{\text{\HY}}}$. Illustrating the relevant interval overlaps where  
$\mathds{1}_{\left\{\l( t_{i-1}^{(1)},t_{i}^{(1)}]\bigcap \l( t_{j-1}^{(2)}, t_{j}^{(2)}]\neq\emptyset\right\}}$ holds. There are ten corresponding interval overlaps.
Note, this visualization is not sufficiently granular for highlighting half-open intervals. 
We further note that the input sequence of observation times is equivalent to $BAAAABAABBBAB,$ assuming points in $\Pi^{(2)}$ have been labelled with $B$,
and points in $\Pi^{(1)}$ have been labelled with an $A.$}
\label{fig:1}
\end{figure}
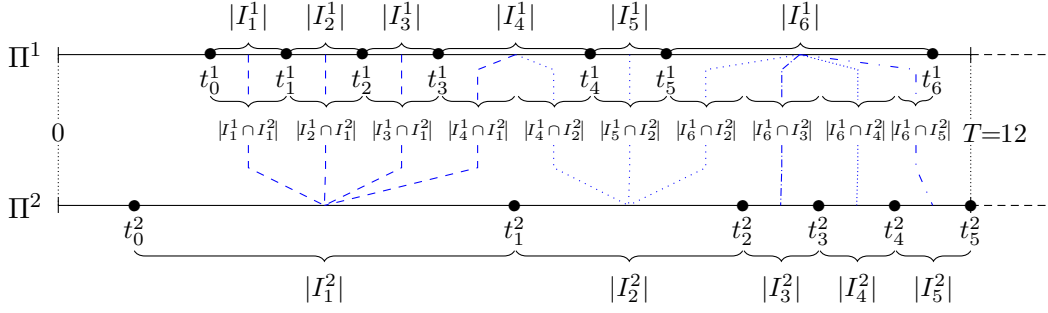
Figure~\ref{fig:1} illustrates that interval $I_1^{(2)}$ overlaps with intervals $I_1^{(1)},I_2^{(1)},I_3^{(1)}$, and $I_4^{(1)},$ respectively;
interval,$I_2^{(2)}$, overlaps with intervals $I_4^{(1)},I_5^{(1)}$ and $I_6^{(1)},$ respectively; and intervals $I_3^{(2)}$ overlaps with interval
$I_6^{(1)}$. Interval, $I_4^{(2)}$, also overlaps with interval $I_6^{(1)}$; and $I_5^{(2)}$ also overlaps with $I_6^{(1)}.$ We could have summarized
the last three overlaps more succinctly,namely, interval $I_6^{(1)}$ also overlaps with intervals $I_3^{(2)},I_4^{(2)}$
and $I_5^{(2)}.$

\begin{align}
{\left<P_{\Pi^{(1)}}^{(1)}, P_{\Pi^{(2)}}^{(2)}\right>}_{{\text{\HY}}}&:= \sum_{j=1}^{5} \sum_{i=1}^{6} \left( P_{t_i^{(1)}}^{(1)} -  P_{t_{i-1}^{(1)}}^{(1)} \right) \left(P_{t_j^{(2)}}^{(2)} - P_{t_{j-1}^{(2)}}^{(2)}\right)\mathds{1}_{\left\{\l(t_{i-1}^{(1)},t_{i}^{(1)} ] \bigcap \l(t_{j-1}^{(2)}, t_{j}^{(2)}]\neq\emptyset\right\}} \label{async1} \\
&\,\text{Evaluating only where $\mathds{1}$ holds,}\nonumber\\
&\,= \left( P_{t_1^{(1)}}^{(1)} -  P_{t_{0}^{(1)}}^{(1)} \right) \left(P_{t_1^{(2)}}^{(2)} - P_{t_{0}^{(2)}}^{(2)}\right) \nonumber \\
&\,+ \left( P_{t_2^{(1)}}^{(1)} -  P_{t_{1}^{(1)}}^{(1)} \right) \left(P_{t_1^{(2)}}^{(2)} - P_{t_{0}^{(2)}}^{(2)}\right) \nonumber \\
&\,+ \left( P_{t_3^{(1)}}^{(1)} -  P_{t_{2}^{(1)}}^{(1)} \right) \left(P_{t_1^{(2)}}^{(2)} - P_{t_{0}^{(2)}}^{(2)}\right) \nonumber \\
&\,+ \left( P_{t_4^{(1)}}^{(1)} -  P_{t_{3}^{(1)}}^{(1)} \right) \left(P_{t_1^{(2)}}^{(2)} - P_{t_{0}^{(2)}}^{(2)}\right) \nonumber \\
&\,+ \left( P_{t_4^{(1)}}^{(1)} -  P_{t_{3}^{(1)}}^{(1)} \right) \left(P_{t_2^{(2)}}^{(2)} - P_{t_{1}^{(2)}}^{(2)}\right) \nonumber \\
&\,+ \left( P_{t_5^{(1)}}^{(1)} -  P_{t_{4}^{(1)}}^{(1)} \right) \left(P_{t_2^{(2)}}^{(2)} - P_{t_{1}^{(2)}}^{(2)}\right) \nonumber \\
&\,+ \left( P_{t_6^{(1)}}^{(1)} -  P_{t_{5}^{(1)}}^{(1)} \right) \left(P_{t_2^{(2)}}^{(2)} - P_{t_{1}^{(2)}}^{(2)}\right) \nonumber \\
&\,+ \left( P_{t_6^{(1)}}^{(1)} -  P_{t_{5}^{(1)}}^{(1)} \right) \left(P_{t_3^{(2)}}^{(2)} - P_{t_{2}^{(2)}}^{(2)}\right) \nonumber \\
&\,+ \left( P_{t_6^{(1)}}^{(1)} -  P_{t_{5}^{(1)}}^{(1)} \right) \left(P_{t_4^{(2)}}^{(2)} - P_{t_{3}^{(2)}}^{(2)}\right) \nonumber \\
&\,+ \left( P_{t_6^{(1)}}^{(1)} -  P_{t_{5}^{(1)}}^{(1)} \right) \left(P_{t_5^{(2)}}^{(2)} - P_{t_{4}^{(2)}}^{(2)}\right) \nonumber \\
&\, \text{Factoring sequentially and pairwise encircling cancellations,} \nonumber\\
&\, \text{we obtain the following.}\label{asyncseqf1} \nonumber \\
\end{align}
\begin{align}
&\, = \Delta P_{1}^{(2)} \left[ \left(\text{\encircle{$P_{t_1^{(1)}}^{(1)}$}} -  P_{t_{0}^{(1)}}^{(1)} \right) + \left(\text{\encircled{$P_{t_2^{(1)}}^{(1)}$}} -  \text{\encircle{$P_{t_{1}^{(1)}}^{(1)}$}}\right) + \left(\text{\encircledd{$P_{t_3^{(1)}}^{(1)}$}} - \text{\encircled{$P_{t_{2}^{(1)}}^{(1)}$}}\right) + \left(P_{t_4^{(1)}}^{(1)} -  \text{\encircledd{$P_{t_{3}^{(1)}}^{(1)}$}}\right)  \right] \nonumber \\
&\, + \Delta P_{2}^{(2)} \left[ \left(\text{\encircle{$P_{t_4^{(1)}}^{(1)}$}} -  P_{t_{3}^{(1)}}^{(1)}\right) + \left( \text{\encircled{$P_{t_5^{(1)}}^{(1)}$}} -  \text{\encircle{$P_{t_{4}^{(1)}}^{(1)}$}}\right) + \left(P_{t_6^{(1)}}^{(1)} -  \text{\encircled{$P_{t_{5}^{(1)}}^{(1)}$}}\right)  \right] \nonumber \\
&\, + \Delta  P_{6}^{(1)} \left[  \left(\text{\encircle{$P_{t_3^{(2)}}^{(2)}$}} - P_{t_{2}^{(2)}}^{(2)}\right) + \left(\text{\encircled{$P_{t_4^{(2)}}^{(2)}$}} - \text{\encircle{$P_{t_{3}^{(2)}}^{(2)}$}}\right) + \left(P_{t_5^{(2)}}^{(2)} - \text{\encircled{$P_{t_{4}^{(2)}}^{(2)}$}}\right)\right] \nonumber \\ 
&\, \text{simplifying after cancellations,} \nonumber \\
&\, =  \left(P_{t_1^{(2)}}^{(2)} - P_{t_{0}^{(2)}}^{(2)}\right) \left( P_{t_4^{(1)}}^{(1)}  - P_{t_{0}^{(1)}}^{(1)}  \right) + \left(P_{t_2^{(2)}}^{(2)} - P_{t_{1}^{(2)}}^{(2)}\right) \left(P_{t_6^{(1)}}^{(1)} - P_{t_{3}^{(1)}}^{(1)}\right) + \left( P_{t_6^{(1)}}^{(1)} -  P_{t_{5}^{(1)}}^{(1)}\right)  \left(  P_{t_{5}^{(2)}}^{(2)} -  P_{t_{2}^{(2)}}^{(2)}\right) \label{old} \nonumber \\
\end{align}

\begin{remark}\label{cm:cancelled}
Ten product summands are reduced to three product summands.
Given the sequence of computation which involves sequential factorization in step  (\ref{asyncseqf1}) of (\ref{async1}), 
four cancellations occur. In particular, data points $P_{t_1^{(1)}}^{(1)}, P_{t_2^{(1)}}^{(1)}, P_{t_3^{(2)}}^{(2)}, P_{t_4^{(2)}}^{(2)}$ are cancelled.
These are the four {\it nonextant} data points.
\end{remark}

\begin{remark}\label{remark:observation:times}
  Since observation times, $\Pi^{(1)}$ and $\Pi^{(2)}$, rather than prices, $P^{(1)}$ and $P^{(2)},$ dictate
  the cancellation phenomenon, for simplicity, some of the analysis will be reduced
  to considering $\Pi^{(1)}$ and $\Pi^{(2)}$ as inputs.
\end{remark} 

\begin{remark}
Finally, the sequence of computations involving sequential factorization needs to be examined more carefully.
In particular, if we factor out $\left( P_{t_4^{(1)}}^{(1)} -  P_{t_{3}^{(1)}}^{(1)} \right),$ we end up with four
(instead of three) product summands. However, we further note, (\ref{new}) is equivalent to (\ref{old}) in value 
but not in minimal representation; for the actual computation, refer to the appendix in (\ref{appendix:1}).
Through the lens of symbolic computation, this illustrates a canonicalization challenge that requires attention and
opens avenues for further exploration with respect to whether a standard, minimal representation exists 
that best exemplifies the maximal cancellation\footnote{If so, what is an efficient procedure 
to compute such a representation. In what context is such a representation unique?}.
\end{remark}

\subsection{(a,b)-Asynchronous Adversary, asynchronous inputs}

Our asynchronous adversary is a statistical adversary that assures that inputs
$\Pi^{(1)}$ and $\Pi^{(2)}$ are generated by two independent homogenous Poisson processes
with rates $a>0$ and $b>0$, respectively, guaranteeing that inputs maintain a certain property.
In our case, this property is that with probability 1, inputs are asynchronous; additionally, our
adversary assures that initial overlaps occur at $I^{(1)}_1$  with $I^{(2)}_1$, and terminal overlaps occur
analogously. This property is necessary for Remark~\ref{remark:points:overlaps} to hold.

Our concrete example in (\ref{example:concrete}) illustrates an instance of asynchronous inputs,
$\Pi^{(1)} = (2,3,4,5,7,8,11.5),$ and  $\Pi^{(2)} = (1,6,9,10,11,12).$ 
On the other hand, an instance of (complete) synchronous inputs would look
like $\Pi^{(1)} = (2,3,4,5,7,8,11.5),$ and $\Pi^{(2)} = (2,3,4,5,7,8,11.5).$ This type of inputs
do not trigger the formulaic bias, hence do not cause any nonextant data points. An instance of
partially synchronous inputs is  $\Pi^{(1)} = (2,3,4,5,7,8,11.5),$ and $\Pi^{(2)} = (1,3,5,6,7,9,12).$

\subsubsection{Asynchronicity and boundary effects} 

  For the asynchronous setting,
  the minimal possible initial and terminal configuations for $\Pi^{(2)}_j$ are as follows.
  Minimal in the sense that a configuration captures with minimal amount of data points 
  when {\it nonextant data} points can arise due to asynchronous boundary condition.
  In our case, determining whether $t^{(2)}_1$ and $t^{(2)}_{M^{(2)}-1}$ are {\it nonextant}.
  Further, we label points in $\Pi^{(1)}$ with $A$ and points in $\Pi^{(2)}$ with $B$ and
  observe the input sequences in the following configurations.
\tikzset{
  fake double/.style 2 args={
    label={[#1, minimum size={#2}]center:}}}
    \begin{figure}[!htb]
        \centering
           \begin{subfigure}[b]{0.4\textwidth}
             \centering
             \caption{Initial Configuration: $BABBA$}
\begin{tikzpicture}[
  o/.style={shape=circle, draw, inner sep=+0pt, minimum size=+2mm},
  O/.style={o,       fake double={o}{+2.6mm}},
  c/.style={o, fill, draw=none},
  C/.style={o, fill, fake double={o}{+3mm}},
  >=Latex,
  line direction/.style={rotate=0, shift={(90:#1)}}]
\foreach \i/\t in { 0/\Pi^{(2)}, 1/\Pi^{(1)}}
  \draw[|-,line direction=\i*.65] 
        (0, 5) coordinate (st\i) node[ left]{$\t$}
        -- coordinate[pos=-.1] (mk\i) +(right:5) coordinate (en\i);
\node[below right, align=center] at (en0) {};
\path (st1) -- (en1)
  node foreach \Pos in  {.25, .8} [c,pos=\Pos]{};
\path (st0) -- (en0)
node foreach \Pos in {0.1,.65} [c, pos=\Pos]{}
node foreach \Pos in {0.43} [C, pos=\Pos]{} ;
\end{tikzpicture}
             \label{subfig:configa}
         \end{subfigure}
         \hfill
         \begin{subfigure}[b]{0.4\textwidth}
             \centering
             \caption{Terminal Configuration:$ABBAB$}
\begin{tikzpicture}[
  o/.style={shape=circle, draw, inner sep=+0pt, minimum size=+2mm},
  O/.style={o,       fake double={o}{+2.6mm}},
  c/.style={o, fill, draw=none},
  C/.style={o, fill, fake double={o}{+3mm}},
  >=Latex,
  line direction/.style={rotate=0, shift={(90:#1)}}]
\foreach \i/\t in { 0/\Pi^{(2)}, 1/\Pi^{(1)}}
  \draw[-|,line direction=\i*.65]
        (0, 5) coordinate (st\i) node[ left]{$\t$}
    -- coordinate[pos=-.1] (mk\i) +(right:5) coordinate (en\i);
\node[below right, align=center] at (en0) {};
\path (st1) -- (en1)
  node foreach \Pos in  {.25, .65} [c,pos=\Pos]{};
\path (st0) -- (en0)
node foreach \Pos in {0.4, .9} [c, pos=\Pos]{}
node foreach \Pos in {0.55} [C, pos=\Pos]{};

\end{tikzpicture}
             \label{subfig:configb}
         \end{subfigure}
         \hfill
         \begin{subfigure}[b]{0.4\textwidth}
             \centering
             \caption{Initial Configuration:$BABAB$}
\begin{tikzpicture}[
  o/.style={shape=circle, draw, inner sep=+0pt, minimum size=+2mm},
  O/.style={o,       fake double={o}{+2.6mm}},
  c/.style={o, fill, draw=none},
  C/.style={o, fill, fake double={o}{+3mm}},
  >=Latex,
  line direction/.style={rotate=0, shift={(90:#1)}}]
\foreach \i/\t in { 0/\Pi^{(2)}, 1/\Pi^{(1)}}
  \draw[|-,line direction=\i*.65]
        (0, 5) coordinate (st\i) node[ left]{$\t$}
    -- coordinate[pos=-.1] (mk\i) +(right:5) coordinate (en\i);
\node[below right, align=center] at (en0) {};
\path (st1) -- (en1)
  node foreach \Pos in  {.25, .5,.85} [c,pos=\Pos]{};
\path (st0) -- (en0)
  node foreach \Pos in {0.1, .35, .65} [c, pos=\Pos]{};
\end{tikzpicture}
       \label{subfig:configc}
     \end{subfigure}
   \hfill
         \begin{subfigure}[b]{0.4\textwidth}
             \centering
             \caption{Terminal Configuration:$ABABAB$}
\begin{tikzpicture}[
  o/.style={shape=circle, draw, inner sep=+0pt, minimum size=+2mm},
  O/.style={o,       fake double={o}{+2.6mm}},
  c/.style={o, fill, draw=none},
  C/.style={o, fill, fake double={o}{+3mm}},
  >=Latex,
  line direction/.style={rotate=0, shift={(90:#1)}}]
\foreach \i/\t in { 0/\Pi^{(2)}, 1/\Pi^{(1)}}
  \draw[-|,line direction=\i*.65]
        (0, 5) coordinate (st\i) node[ left]{$\t$}
    -- coordinate[pos=-.1] (mk\i) +(right:5) coordinate (en\i);
    \node[below right, align=center] at (en0) {};
    \path (st1) -- (en1)
  node foreach \Pos in  {0.25,.5, .8} [c,pos=\Pos]{};
\path (st0) -- (en0)
node foreach \Pos in {.35, .65, 0.9} [c, pos=\Pos]{};
\end{tikzpicture}
             \label{subfig:configd}
           \end{subfigure}
           \caption{Minimal initial and terminal configuration for $\Pi^{(2)}$ in asynchronous setting.
  Note that points plotted as \!\!\nonextantP{} represent \textit{nonextant data points}, and will be cancelled during the computation,
  whereas other points survive in the computation.}
\label{fig:four_configurations}
\end{figure}

We observe a combination of configurations~\ref{subfig:configa} and ~\ref{subfig:configb},
that yields the following sequence $BABBAB$ in the closed observation time interval, which contains subsequence $BABBA$ and subsequence $ABBAB$.

\begin{figure}[!htpb]
  \centering
\begin{tikzpicture}[
  o/.style={shape=circle, draw, inner sep=+0pt, minimum size=+2mm},
  O/.style={o,       fake double={o}{+2.6mm}},
  c/.style={o, fill, draw=none},
  C/.style={o, fill, fake double={o}{+3mm}},
  >=Latex,
  line direction/.style={rotate=0, shift={(90:#1)}}]
\foreach \i/\t in { 0/\Pi^{(2)}, 1/\Pi^{(1)}}
  \draw[|-|,line direction=\i*.75]
        (0, 5) coordinate (st\i) node[ left]{$\t$}
    -- coordinate[pos=-.1] (mk\i) +(right:5) coordinate (en\i);
\node[below right, align=center] at (en0) {};
\path (st1) -- (en1)
  node foreach \Pos in  {.34, .8} [c,pos=\Pos]{};
\path (st0) -- (en0)
node foreach \Pos in {0.2, .9} [c, pos=\Pos]{}
node foreach \Pos in { .45, .65} [C, pos=\Pos]{};
\end{tikzpicture}
\caption{Combination of configurations~\ref{subfig:configa} and ~\ref{subfig:configb} in closed interval, forming sequence  $BABBAB$,
which contains subsequences  $BABBA$ and $ABBAB.$ This configuration yields to consecutive \textit{nonextant data points} in $\Pi^{(2)}.$ }
\end{figure}

\subsubsection{Asynchronicity and absence of boundary effects}

For the remaining  $t^{(2)}_j$ in $\Pi^{(2)}$ that are not impacted by boundary effects,
the following minimal configuration induces \textit{nonextant data points}, giving
rise to sequence $BBB$ in this case. Analogously, $AAA$ for $t^{(1)}_i$ in $\Pi^{(1)}.$

\begin{figure}[!htp]
  \centering
\begin{tikzpicture}[
  o/.style={shape=circle, draw, inner sep=+0pt, minimum size=+2mm},
  O/.style={o,       fake double={o}{+2.6mm}},
  c/.style={o, fill, draw=none},
  C/.style={o, fill, fake double={o}{+3mm}},
  >=Latex,
  line direction/.style={rotate=0, shift={(90:#1)}}]
\foreach \i/\t in { 0/\Pi^{(2)}, 1/\Pi^{(1)}}
  \draw[line direction=\i*.75]
        (0, 5) coordinate (st\i) node[ left]{$\t$}
    -- coordinate[pos=-.1] (mk\i) +(right:5) coordinate (en\i);
\node[below right, align=center] at (en0) {};
\path (st1) -- (en1)
  node foreach \Pos in  {.1, .9} [c,pos=\Pos]{};
\path (st0) -- (en0)
  node foreach \Pos in {0.3,.8 } [c, pos=\Pos]{}
   node foreach \Pos in { .6} [C, pos=\Pos]{};
\end{tikzpicture}
\caption{ For $t^{(2)}_j$ in $\Pi^2$ not affected by boundary condition. }
\end{figure}

\subsection{Overlaps}\label{overlaps}

We let $m$ be the {\it maximal number of overlaps} between $I^{(1)}$ and $I^{(2)}.$ 

\begin{lem}\label{lemma1}
  Label points in $\Pi^{(1)}$ with $A$ and points in $\Pi^{(2)}$ with $B$. 
  Then the number of overlaps, $m$, is ``almost'' equal to the total number of points.
  So asymptotically, for large $T$, this is $T(a+b)$. 
\end{lem}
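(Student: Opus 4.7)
The plan is to enumerate the overlaps by iterating over each interval $I_i^{(1)}$ of $\Pi^{(1)}$ and counting the $\Pi^{(2)}$-intervals it intersects. Summing yields a clean closed form for $m$, from which the asymptotic $T(a+b)$ claim follows by the law of large numbers for Poisson processes. An equivalent viewpoint uses the labelled merged sequence (as in $BAAAABAABBBAB$): each gap between consecutive points in the merged string is contained in at most one $\Pi^{(1)}$-interval and at most one $\Pi^{(2)}$-interval, and therefore corresponds bijectively to at most one overlap pair.

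For each $i$, let $k_i$ denote the number of points of $\Pi^{(2)}$ lying inside $I_i^{(1)}$. Since the $(a,b)$-asynchronous adversary ensures that points of $\Pi^{(1)}$ and $\Pi^{(2)}$ never coincide (almost surely), a short case distinction shows that exactly $k_i + 1$ intervals $I_j^{(2)}$ meet $I_i^{(1)}$: if $k_i = 0$ then $I_i^{(1)}$ is contained in one $I_j^{(2)}$, while if $k_i \geq 1$ the $k_i$ interior $\Pi^{(2)}$-points split $I_i^{(1)}$ into $k_i + 1$ subintervals, each assigned to a distinct $I_j^{(2)}$. Summing gives $m = \sum_{i=1}^{M^{(1)}}(k_i+1) = M^{(1)} + \sum_{i=1}^{M^{(1)}} k_i$, and $\sum_i k_i$ is precisely the number of $\Pi^{(2)}$-points lying strictly inside $\bigcup_i I_i^{(1)} = (t_0^{(1)}, t_{M^{(1)}}^{(1)}]$, differing from the total $\Pi^{(2)}$-count $M^{(2)}+1$ only by those $\Pi^{(2)}$-points that fall outside $[t_0^{(1)}, t_{M^{(1)}}^{(1)}]$, a boundary correction of $O(1)$ in expectation.

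Consequently $m = M^{(1)} + M^{(2)} + O(1)$, so $m$ is ``almost'' equal to the total point count $M^{(1)} + M^{(2)} + 2$. Since $\mathbb{E}[M^{(1)}] = aT + O(1)$ and $\mathbb{E}[M^{(2)}] = bT + O(1)$, the strong law applied to the two independent Poisson processes gives $m/T \to a+b$ almost surely as $T \to \infty$, which is the desired asymptotic. The main obstacle will be the boundary accounting: the word ``almost'' conceals the need to verify that $\Pi^{(2)}$-points outside $[t_0^{(1)}, t_{M^{(1)}}^{(1)}]$ (and symmetrically $\Pi^{(1)}$-points outside the $\Pi^{(2)}$-window) are negligible compared to $T$, but this follows routinely from the finite expectation of exponential interarrival times, after which the leading-order $T(a+b)$ term is immediate.
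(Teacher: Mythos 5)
Your proof is correct, but it is organized differently from the paper's. The paper first augments both sequences with ``double points'' at $0$ and $T$ (absorbing the boundary into an $O(1)$ error) and then proves, by induction on the number of interior points, that the number of overlaps equals one plus the number of interior points: each newly inserted point splits exactly one existing interval in two and creates exactly one new overlapping pair. You instead give a static, direct count: partition by the $\Pi^{(1)}$-intervals, observe that an interval $I_i^{(1)}$ containing $k_i$ points of $\Pi^{(2)}$ meets exactly $k_i+1$ intervals of $\Pi^{(2)}$, and sum to get $m = M^{(1)} + M^{(2)} + O(1)$. Both arguments rest on the same combinatorial fact and both push the boundary discrepancy into an $O(1)$ expectation term; yours has the advantage of producing an explicit closed form for $m$ before passing to the limit, while the paper's induction makes it transparent why each point contributes exactly one overlap. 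Your alternative ``merged string'' viewpoint (each overlapping pair $(I_i^{(1)}, I_j^{(2)})$ corresponds to exactly one gap between consecutive points of the merged sequence, since no further point of either sequence can lie strictly inside the intersection) is arguably the cleanest of the three and is worth developing fully if you want a bijective proof. One small point to tighten: the identity ``$k_i+1$ intervals of $\Pi^{(2)}$ meet $I_i^{(1)}$'' fails for the one or two $\Pi^{(1)}$-intervals that protrude beyond the range $(t_0^{(2)}, t_{M^{(2)}}^{(2)}]$ (there the count drops to $k_i$ or $0$); you flag this under ``boundary accounting,'' and it is indeed only an $O(1)$ correction, but it is a second boundary effect distinct from the miscount of $\sum_i k_i$ versus $M^{(2)}+1$, so both should be mentioned when the $O(1)$ is justified.
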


\begin{proof} 
Add points of both types at $0$ and $T$; this will add one interval of
each type at each end, and, in expectation, $O(1)$ intersections. (It should
be easy to calculate this error precisely, at least for large $T$.)
If we have double points at $0$ and $T$, then the number of intersections
equals 1 plus the number of interior points, by induction: with no interior points we
have just two intervals and one overlap; if you add one point, it will
divide one existing interval into two, and the total number of overlaps of
these with intervals of the opposite class will increase by 1. (We assume
that with probability $1$, all interior points are distinct.)
\end{proof}

\begin{remark}\label{remark:points:overlaps}
  Note that inputs generated by our {\it asynchronous adversary} are completely asynchronous.
  Additionally, we can assure that the number of intersections or overlaps,$m$,
  equals the number of data points in $\Pi^{(1)}$ and $\Pi^{(2)}$ minus three, or,
  $m=(a+b)-3.$ Reviewing our concrete example~\ref{example:concrete}, helps to clarify this. 
\end{remark}

\subsection{Telescoping, nonextant data points, average data point loss}\label{nonextant:pts}

While there are various ways to formalize \textit{nonextant data points}, i.e., the
output of the estimator does not depend on the value of the functions at those points,
or, points that completely cancel out during the computation without influencing the
output of the estimator, a mathematically precise definition appears to be
estimator specific. For the (\HY)-estimator, we note necessary and
sufficient conditions to trigger the formulaic bias which results from an
interaction of telescoping property and indicator function, is input specific.
Completely synchronous input does not induce this bias; asynchronous input is necessary.
One way to obtain a clearer mathematical definition is by analyzing the summation formula,~\ref{our:def:form}.
A key insight to the formulaic bias, follows from considering what happens
when we fix one of the index variables while summing over the other, as seen in our concrete example~\ref{example:concrete}.
In particular, observe which endpoints get cancelled due to telescoping while others survive when the
indicator function, $\mathds{1}_{\left\{ \l(t_{i-1}^{(1)}, t_{i}^{(1)}] \cap \l(t_{j-1}^{(2)},t_{j}^{(2)}]\neq\emptyset\right\}},$ holds true.
Using this insight, we can formalize the {\it nonextant intervals}, intervals that might contain {\it nonextant points}.
Following Remark~\ref{remark:observation:times}, for simplicity, we focus on observation times, rather than prices.

\begin{prop}
  Let $\Pi^{(1)} := (t_i^{(1)})_{i=0,1,..,M^{(1)}}$ and $\Pi^{(2)} := (t_j^{(2)})_{j=0,1,..,M^{(2)}}$,  be the inputs to the (\text{\HY})-estimator,
  and let $t_{i^+}^{(2)} := \min\{ t_v^{(2)} : t_v^{(2)} > t_i^{(1)}\}$ and  $t_{i^-}^{(2)} := \max\{ t_v^{(2)} : t_v^{(2)} < t_i^{(1)}\}$, then given endpoints,
  $t_{i-1}^{(1)}$ and $t_{i}^{(1)}$ in $\Pi^{(1)}$ for $1<i<M^{(1)}$,  nonextant data points are precisely the data points in $\Pi^{(2)}$
  that occur between $(t_{(i-1)^+}^{(2)} .. t_{(i)^-}^{(2)}).$
\end{prop}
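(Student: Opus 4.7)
My plan is to convert nonextantness of a $\Pi^{(2)}$ point into a purely combinatorial criterion about overlap sets, and then verify that criterion locally on each interior $\Pi^{(1)}$-interval.

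\medskip

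First I would rewrite the estimator in (\ref{our:def:form}) by fixing $j$ and summing over $i$. For each $j$, the set
\[
A_j \;:=\; \{\,i:I_i^{(1)}\cap I_j^{(2)}\neq\emptyset\,\}\;=\;\{a_j, a_j+1,\ldots,b_j\}
\]
is a contiguous block of indices, by the monotonicity of the $t_i^{(1)}$ endpoints. Telescoping in $i$ then collapses the inner sum to $P_{t_{b_j}^{(1)}}^{(1)}-P_{t_{a_j-1}^{(1)}}^{(1)}$. Expanding $\Delta P_j^{(2)}$ in the resulting outer sum and collecting, the formal coefficient of $P_{t_v^{(2)}}^{(2)}$ becomes
\[
\bigl(P_{t_{b_v}^{(1)}}^{(1)} - P_{t_{a_v-1}^{(1)}}^{(1)}\bigr)\;-\;\bigl(P_{t_{b_{v+1}}^{(1)}}^{(1)} - P_{t_{a_{v+1}-1}^{(1)}}^{(1)}\bigr).
\]
Because the $(a,b)$-asynchronous adversary makes all observation times almost surely distinct, the prices at those times can be treated as algebraically independent indeterminates, so $t_v^{(2)}$ will be nonextant if and only if this coefficient vanishes identically, which happens precisely when $A_v=A_{v+1}$.

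\medskip

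Next I would fix an interior index $1<i<M^{(1)}$ and let $s_1<\cdots<s_k$ denote the $\Pi^{(2)}$ points lying in $(t_{i-1}^{(1)},t_i^{(1)})$, so that whenever $k\geq 1$ we have $s_1=t_{(i-1)^+}^{(2)}$ and $s_k=t_{(i)^-}^{(2)}$. For the easy ``if'' direction: if $t_v^{(2)}=s_\ell$ with $1<\ell<k$, then both neighbours $t_{v-1}^{(2)}=s_{\ell-1}$ and $t_{v+1}^{(2)}=s_{\ell+1}$ also lie in $(t_{i-1}^{(1)},t_i^{(1)})$, so the intervals $I_v^{(2)}$ and $I_{v+1}^{(2)}$ are themselves contained in $(t_{i-1}^{(1)},t_i^{(1)})$ and each one can only overlap with $I_i^{(1)}$; hence $A_v=A_{v+1}=\{i\}$ and $s_\ell$ is nonextant by the criterion above. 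For the ``only-if'' direction I must show that the extremal points $s_1$ and $s_k$ do not cancel. For $s_1=t_v^{(2)}$, its $\Pi^{(2)}$-predecessor $t_{v-1}^{(2)}$ lies strictly to the left of $t_{i-1}^{(1)}$, so $I_v^{(2)}=(t_{v-1}^{(2)},s_1]$ contains $t_{i-1}^{(1)}$ and therefore overlaps with $I_{i-1}^{(1)}$, whereas $I_{v+1}^{(2)}=(s_1,t_{v+1}^{(2)}]$ starts strictly above $t_{i-1}^{(1)}$ and is disjoint from $I_{i-1}^{(1)}$; thus $i-1\in A_v\setminus A_{v+1}$ and $s_1$ is extant. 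The symmetric argument handles $s_k$ via $I_{i+1}^{(1)}$.

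\medskip

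The main obstacle, and exactly the reason the hypothesis $1<i<M^{(1)}$ appears in the statement, is the ``only-if'' step: producing the separating witness $I_{i\pm 1}^{(1)}$ to break $A_v=A_{v+1}$ at $s_1$ and $s_k$ requires these neighbouring $\Pi^{(1)}$-intervals to exist, which fails precisely when $i\in\{1,M^{(1)}\}$. Those two boundary cases correspond to the asynchronous configurations of Figure~\ref{fig:four_configurations} and must be treated separately. A minor but real bookkeeping subtlety, neutralized by asynchronicity, is that one must keep careful track of the half-open endpoints of the intervals $I_v^{(2)}$ and $I_{i-1}^{(1)}$ when arguing that $t_{i-1}^{(1)}$ actually lies in $I_v^{(2)}$.
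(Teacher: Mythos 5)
Your argument is correct, and it reaches the conclusion by the transposed version of the paper's telescoping. The paper fixes $i$ and telescopes the sum over $j$: for a fixed $\Delta P^{(1)}_i$ the admissible $t^{(2)}_j$ form a contiguous block, the interior prices cancel within that block, and the two extremal ones survive only because they reappear in the neighbouring blocks attached to $\Delta P^{(1)}_{i-1}$ and $\Delta P^{(1)}_{i+1}$ --- a fact the paper states but does not push all the way to an exact vanishing criterion. You instead fix $j$, telescope over $i$, and then read off the formal coefficient of each $P^{(2)}_{t_v}$ in the collapsed double sum, obtaining the clean necessary-and-sufficient condition $A_v=A_{v+1}$ (which, since generically $a_{v+1}=b_v$, forces both blocks to be the same singleton $\{i\}$ and so recovers the subset condition of Theorem~\ref{theo8}). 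This buys you an airtight ``only if'' direction: the witness $i-1\in A_v\setminus A_{v+1}$ for $s_1$ (and $i+1$ for $s_k$) shows the coefficient is a nonzero combination of algebraically independent prices, rather than relying on the informal observation that the extremal terms ``appear elsewhere.'' Your identification of $1<i<M^{(1)}$ as exactly what guarantees the existence of the separating intervals $I^{(1)}_{i\pm1}$ matches the paper's remark deferring $i=1$ and $i=M^{(1)}$ to a boundary analysis. The only loose end is the degenerate case $s_1=t^{(2)}_0$, where $I^{(2)}_v$ does not exist; there the coefficient of $P^{(2)}_{t_0}$ is the single block term $-\bigl(P^{(1)}_{t_{b_1}}-P^{(1)}_{t_{a_1-1}}\bigr)\neq 0$, so the point is extant anyway (and the adversary's normalization of the initial overlap excludes this case for interior $i$); worth one sentence to close.
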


\begin{proof}
  Note that $\l(t_{i-1}^{(1)}, t_{i}^{(1)}] \bigcap \l(t_{j-1}^{(2)},t_{j}^{(2)}]\neq\emptyset$ if and only if $t_i^{(1)}>t^{(2)}_{j-1}$ and $t^{(1)}_{i-1}<t^{(2)}_j$.
  This is equivalent to $t^{(2)}_{j-1}\le t^{(2)}_{i^-}$ and $t^{(2)}_j \ge t^{(2)}_{(i-1)^+}$. Hence, for the actual inputs with prices,
  the terms  $\Delta P^{(1)}_{i} \cdot \Delta P^{(2)}_j$  for a fixed $i$ are those with such $t^{(2)}_j.$ Then, telescoping the sum
  of them cancels the terms with $t^{(2)}_j$ for $t^{(2)}_{j}\le t^{(2)}_{i^-}$ and  $t^{(2)}_j \ge t^{(2)}_{(i-1)^+}.$  However, these are
  not necessarily the nonextant ones, since the smallest of them will appear in a term with $\Delta P^{(1)}_{i-1}$ (unless $i=1$), and
  similarly for the largest.
\end{proof}

\begin{remark}
  Given asynchronous inputs, separate consideration for the initial case when $i=1$, and terminal case when $i=M^{(1)},$ is needed.
  Let $i=1$, then, given endpoints $t_{0}^{(1)}$ and $t_{1}^{(1)}$ in $\Pi^{(1)},$ the nonextant interval is simply  $\left(\min\{ t_{(0)^-}^{(2)}, t_{(0)+}^{(2)}\} .. t_{(1)^-}^{(2)}\right).$
  We note since endpoint $\min\{ t_{(0)^-}^{(2)}, t_{(0)+}^{(2)}\}$, which is the inital endpoint of the intial overlap, and hence, not a nonextant data point, can
  occur prior to $t_0^{(1)}$, therefore, we can no longer simply use $t_{(0)^+}^{(2)}.$  Analogous consideration for the terminal case, when $i=M^{(1)}.$  
\end{remark}

Let us now characterize {\it nonextant data points} in a more succinct and algorithmically
conducive way. Our procedure decides whether a data point is {\it nonextant} based
on whether the union of its (two) immediate adjacent
intervals forms a subset over the corresponding overlapping interval. For boundary
effects potentially impacting second and penultimate data point, an additional
condition is employed.

We abbreviate the cumulative number of {\it nonextant data points}
in both $\Pi^{(1)}$ and $\Pi^{(2)}$ as $f.$
With regard to quantifying and defining {\it average data loss}, we observe
that, asymptotically, for large $T$, our metric converges to the
theoretical values (as evidenced in Table 1 of section~(\ref{eda})).
A natural definition for {\it average data loss}, agnostic
to how inputs are generated, can be based on overlaps.  
Since overlaps form a necessary but not sufficient condition for nonextant
data points, defining data loss as the ratio of the number of total nonextant data points
over the number of total overlaps might seem reasonable,i.e., $f/m.$
As we noted in the above Remark~\ref{remark:points:overlaps}, we can compute
the number of overlaps from the number of data points.

In this note we specify the underlying stochastic process that our inputs
are generated from and {\it estimate the average proportion of data loss}
as shown in~(\ref{average:proportion:nonextant}) and~(\ref{corollary1}).

\begin{thmx}\label{theo8} Let $\Pi^{(1)} := (t_i^{(1)})_{i=0,1,..,M^{(1)}}$ and $\Pi^{(2)} := (t_j^{(2)})_{j=0,1,..,M^{(2)}}$,
  be the inputs to the (\text{\HY})-estimator. Then, $t^{(2)}_j$, for $1<j<M^{(2)}-1$,
  is {\it nonextant} if and only if $I^{(2)}_j\cup I^{(2)}_{j+1} \subseteq I^{(1)}_i$ for some $i.$
\end{thmx}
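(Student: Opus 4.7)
The plan is to deduce Theorem~\ref{theo8} from the preceding proposition (together with its boundary remark for $i=1$ and $i=M^{(1)}$) by recasting the ``nonextant interval'' condition $t_{(i-1)^+}^{(2)} < t_j^{(2)} < t_{i^-}^{(2)}$ as an endpoint condition on the two neighbours $t_{j-1}^{(2)}$ and $t_{j+1}^{(2)}$, which will then coincide with the subset inclusion $I_j^{(2)}\cup I_{j+1}^{(2)} \subseteq I_i^{(1)}$.

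The first step is the key translation, carried out for interior $i$, i.e.\ $1<i<M^{(1)}$. Because $t_{(i-1)^+}^{(2)}$ is by definition the smallest element of $\Pi^{(2)}$ strictly above $t_{i-1}^{(1)}$, the inequality $t_{(i-1)^+}^{(2)} < t_j^{(2)}$ is equivalent to the existence of some $t_v^{(2)}$ with $t_{i-1}^{(1)} < t_v^{(2)} < t_j^{(2)}$. Since $j \ge 2$, the maximal such element strictly below $t_j^{(2)}$ is precisely $t_{j-1}^{(2)}$, so this reduces to $t_{i-1}^{(1)} < t_{j-1}^{(2)}$. By a symmetric argument, $t_j^{(2)} < t_{i^-}^{(2)}$ is equivalent to $t_{j+1}^{(2)} < t_i^{(1)}$, using $j \le M^{(2)}-2$ to guarantee that $t_{j+1}^{(2)}$ is defined. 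Combining, and noting that asynchronicity makes every inequality strict, the proposition's condition becomes $t_{i-1}^{(1)} < t_{j-1}^{(2)}$ together with $t_{j+1}^{(2)} < t_i^{(1)}$, which is exactly $\l(t_{j-1}^{(2)}, t_{j+1}^{(2)}] \subseteq \l(t_{i-1}^{(1)}, t_i^{(1)}]$, i.e., $I_j^{(2)}\cup I_{j+1}^{(2)} \subseteq I_i^{(1)}$. Both implications then follow from a bidirectional reading of the resulting equivalence.

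The remaining work is to check that the translation still holds for the boundary values $i\in\{1, M^{(1)}\}$. For $i=1$, the remark gives nonextant interval $\bigl(\min\{t_{(0)^-}^{(2)}, t_{(0)^+}^{(2)}\} \dd t_{(1)^-}^{(2)}\bigr)$. The $(a,b)$-asynchronous adversary's guarantee that $I_1^{(1)}$ and $I_1^{(2)}$ overlap forces $t_0^{(1)} < t_1^{(2)}$, so at most one $\Pi^{(2)}$-point (namely $t_0^{(2)}$) can lie below $t_0^{(1)}$; consequently $\min\{t_{(0)^-}^{(2)}, t_{(0)^+}^{(2)}\} \le t_0^{(2)}$, and for any interior $j \ge 2$ we automatically have $\min\{t_{(0)^-}^{(2)}, t_{(0)^+}^{(2)}\} \le t_0^{(2)} < t_1^{(2)} \le t_{j-1}^{(2)} < t_j^{(2)}$. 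The lower endpoint of the nonextant interval is therefore cleared for free, and the translation collapses to $t_{j+1}^{(2)} < t_1^{(1)}$, matching the right endpoint of $I_j^{(2)} \cup I_{j+1}^{(2)} \subseteq I_1^{(1)}$. The case $i = M^{(1)}$ is symmetric via the terminal-overlap guarantee. The main obstacle I anticipate is exactly this boundary bookkeeping: without the adversary's assumption one could have $t_0^{(2)} < t_0^{(1)} < t_1^{(2)}$ producing a nonextant $t_j^{(2)}$ whose left neighbour $t_{j-1}^{(2)}$ nonetheless fails to sit strictly above $t_0^{(1)}$; the adversary rules this out for $j \ge 2$, which is precisely the hypothesis $1 < j$ of the theorem.
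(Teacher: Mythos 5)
Your proof is correct, but it takes a genuinely different route from the paper's. The paper argues directly on the summation formula: the terms containing $P^{(2)}_{t_j}$ are indexed by those $i$ for which $I^{(1)}_i$ meets $I^{(2)}_j$ (positive occurrences) or $I^{(2)}_{j+1}$ (negative occurrences), and full cancellation occurs exactly when these two index sets coincide, which forces both to be the single $i$ with $I^{(2)}_j\cup I^{(2)}_{j+1}\subseteq I^{(1)}_i$. You instead obtain the theorem as a corollary of the preceding proposition, translating membership of $t^{(2)}_j$ in the open nonextant interval from $t^{(2)}_{(i-1)^+}$ to $t^{(2)}_{i^-}$ into the pair of inequalities $t^{(1)}_{i-1}<t^{(2)}_{j-1}$ and $t^{(2)}_{j+1}<t^{(1)}_i$, i.e., $I^{(2)}_j\cup I^{(2)}_{j+1}\subseteq I^{(1)}_i$; that translation is sound for $2\le j\le M^{(2)}-2$, and your boundary analysis for $i\in\{1,M^{(1)}\}$ correctly leans on the adversary's initial and terminal overlap guarantees, which the paper's one-line proof does not engage with. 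Your route makes the logical dependence on the proposition explicit and reduces the theorem to an order-theoretic reindexing of neighbours; the paper's route is self-contained and yields the ``exactly one $i$'' formulation on which the footnote's boundary test and Algorithm~\ref{algo1} are built. Two refinements would tighten your argument: first, to pass from the proposition's per-$i$ statement to the theorem's global ``for some $i$,'' you should note that each $t^{(2)}_j$ lies in at most one $I^{(1)}_i$, so the per-$i$ nonextant intervals are pairwise disjoint and a point lying in none of them is extant; second, in your closing caveat the configuration $t^{(2)}_0<t^{(1)}_0<t^{(2)}_1$ is in fact permitted by the adversary --- what the initial-overlap guarantee actually excludes is $t^{(2)}_1<t^{(1)}_0$, and it is that exclusion together with $j\ge 2$ that makes $t^{(2)}_{j-1}>t^{(1)}_0$ automatic.
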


\begin{proof}
  $t^{(2)}_j$ is {\it nonextant}, if and only if, for
  exactly one $i$ in $I^{(1)}_i$, $I^{(1)}_i\cap I^{(2)}_j\neq\emptyset \iff I^{(1)}_i\cap I^{(2)}_{j+1}\neq\emptyset$
  because this will make all terms with $t^{(2)}_j$ (or, $P^{(2)}_{t_j}$, if we had used $P_{\Pi^{(1)}}^{(1)}, P_{\Pi^{(2)}}^{(2)}$ as inputs) cancel.
  This, in turn, happens, if and only if, $I^{(2)}_j\cup I^{(2)}_{j+1} \subseteq I^{(1)}_i$ for
$1<j<M^{(2)}-1$. For $j=1$, and $j=M^{(2)}-1,$ if the condition $I^{(2)}_j\cup I^{(2)}_{j+1} \subseteq I^{(1)}_i$
does not hold, we need test whether $I^{(2)}_j\cup I^{(2)}_{j+1}$ intersects  $I^{(1)}_i$ for exactly one $i.$\footnote{%
  Perhaps alternatively,
  $t^{(2)}_j$ is {\it nonextant}, if and only if,
  $\left(I^{(2)}_j\cup I^{(2)}_{j+1} \right) \cap I^{(1)}_i \neq \emptyset \,
  \text{and}\, \forall_{k \neq i} \left( (I_{j}^{(2)} \cup I_{(j+1)}^{(2)}) \cap I_{k}^{(1)} = \emptyset \right).$
}%
\end{proof}

With the above out of the way, we address the
questions of what the minimal average data point loss is given
inputs $\Pi^{(1)}$ and $\Pi^{(2)}$ generated by our $(a,b)$-asynchronous adversary,
and for what values of $a$ and $b$ can we guarantee this.

\begin{thmx}\label{average:proportion:nonextant}
  Assume that  $\Pi^{(1)}$ and  $\Pi^{(2)}$ are generated by our $(a,b)$-asynchronous adversary,
  and are inputs to the (\text{\HY})-estimator. Additionally, label points in $\Pi^{(1)}$ with $A$
  and points in $\Pi^{(2)}$ with $B$. Further, let $T$ be large, and ignore border effects. 
  Then the proportion of all points, i.e., $\Pi^{(1)}\cup\Pi^{(2)},$ that are in $\Pi^{(1)}$ and are nonextant there\footnote{NB: The expected proportion of points that are in $\Pi^{(1)}$ is $a/(a+b)$, and thus the expected proportion of nonextant points in $\Pi^{(1)}$ is $(a/(a+b))^2$.},
  is equivalent to the expected proportion of occurrence of substring $AAA$ which is $(a/(a+b))^3$.
\end{thmx}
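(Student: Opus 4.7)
The plan is to (i) invoke Theorem~\ref{theo8} in its symmetric form to characterise the nonextant points of $\Pi^{(1)}$, (ii) re-express that characterisation as a local combinatorial pattern in the merged $A/B$ sequence on $\Pi^{(1)}\cup\Pi^{(2)}$, and (iii) conclude via Poisson superposition and a law-of-large-numbers argument.

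For step (i), swapping the roles of $\Pi^{(1)}$ and $\Pi^{(2)}$ in Theorem~\ref{theo8}, a point $t^{(1)}_i$ with $1<i<M^{(1)}-1$ is nonextant iff $I^{(1)}_i\cup I^{(1)}_{i+1}\subseteq I^{(2)}_j$ for some $j$; since $I^{(1)}_i\cup I^{(1)}_{i+1}=\l(t^{(1)}_{i-1},t^{(1)}_{i+1}]$, this is equivalent to the statement that no point of $\Pi^{(2)}$ lies in $\o(t^{(1)}_{i-1},t^{(1)}_{i+1})$ (the flanking $B$ points that realise the containing interval $I^{(2)}_j$ are furnished by the boundary property of our adversary, cf.\ Remark~\ref{remark:points:overlaps}). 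For step (ii), merge $\Pi^{(1)}\cup\Pi^{(2)}$ into a single increasing sequence and label each entry by $A$ or $B$ according to its origin. The no-$B$-between condition then says exactly that $t^{(1)}_{i-1},t^{(1)}_i,t^{(1)}_{i+1}$ occupy three consecutive positions in the merged string, i.e.\ the position of $t^{(1)}_i$ is the middle of an $AAA$ triplet. Hence the nonextant points of $\Pi^{(1)}$ are in bijection with the middle-of-$AAA$ positions in the merged labelled sequence.

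For step (iii), Poisson superposition gives that $\Pi^{(1)}\cup\Pi^{(2)}$ is a homogeneous Poisson process of rate $a+b$, and that, conditionally on its atoms, the labels are i.i.d.\ Bernoulli with $\Pr{A}=p:=a/(a+b)$. For any interior position $k$, the event that positions $k-1,k,k+1$ all carry label $A$ has probability $p^3$, so by the law of large numbers applied to the stationary $2$-dependent indicator sequence $\mathds{1}\{k\text{ is the middle of }AAA\}$, the empirical proportion of such positions converges almost surely to $p^3=(a/(a+b))^3$ as $T\to\infty$. Boundary contributions are $O(1)$ and so vanish in the proportion, matching the ``ignore border effects'' proviso; the only mildly delicate point is the bijection in step (ii), and the footnote identity $p^2=(a/(a+b))^2$ for the conditional proportion of nonextant points inside $\Pi^{(1)}$ follows as an immediate corollary upon dividing by $p$.
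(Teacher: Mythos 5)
Your proposal is correct and follows essentially the same route as the paper: identify the nonextant points of $\Pi^{(1)}$ with the middle positions of $AAA$ triples in the merged labelled sequence (equivalently, the condition $I^{(1)}_i\cup I^{(1)}_{i+1}\subseteq I^{(2)}_j$ of Theorem~\ref{theo8} applied with the roles of the two sequences swapped), and then use the fact that each merged point is independently an $A$ with probability $a/(a+b)$. You are somewhat more explicit than the paper about the Poisson superposition/marking step and about upgrading the expected proportion to an almost-sure limit via a law of large numbers for the $2$-dependent indicator sequence, but the underlying argument is the same.
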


\begin{proof}
 Using language of and construction in Lemma \ref{lemma1},
 a point labelled A is nonextant $\iff$ it is preceded and followed by A.
In that case, it borders two intervals that are subsets of the same class 2-interval, but if not,
then, say the point is followed by B, and thus the A-interval to the right is the first A-interval
that intersects the next B-interval, and there the point is not nonextant.
Hence, ignoring border effects of the order O(1), the number of nonextant points should be the same
as the number of substrings $AAA$. The number of triples equals the number of
total data points minus two which is approximately equal to the number of overlaps, $m$.
Each point has probability $a/(a+b)$ to be an $A$; thus the expected proportion of $AAA$ is $(a/(a+b))^3.$
\end{proof}

\begin{cor}\label{corollary1}
  Let  $\Pi^{(1)}$ and  $\Pi^{(2)}$ be inputs to the (\HY)-estimator, which were generated by
  our $(a,b)$-asynchronous adversary. Further, assume $T$ is large, and ignore border
  effects. 
  Then the (cumulative) average proportion of nonextant data points in both
  $\Pi^{(1)}$ and $\Pi^{(2)}$  is provided by
  $f(a,b) = (\frac{a}{a+b})^3 + (\frac{b}{a+b})^3,$ where $a>0$ and $b>0.$
\end{cor}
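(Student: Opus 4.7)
The plan is to deduce the corollary almost immediately from Theorem~\ref{average:proportion:nonextant} by invoking symmetry, so the work reduces to two applications of the theorem plus a disjointness observation. First, Theorem~\ref{average:proportion:nonextant} gives directly that the asymptotic proportion of points in $\Pi^{(1)}\cup\Pi^{(2)}$ that lie in $\Pi^{(1)}$ and are nonextant equals $(a/(a+b))^3$, since these correspond to the interior positions of $AAA$-substrings in the $A/B$-labelled interleaving of the two Poisson point processes.

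Next, I would observe that the $(a,b)$-asynchronous adversary is symmetric under swapping the roles of $\Pi^{(1)}$ and $\Pi^{(2)}$ (equivalently, relabelling $A\leftrightarrow B$ and $a\leftrightarrow b$); the characterization of nonextantness supplied by Theorem~\ref{theo8}, namely $I^{(2)}_j\cup I^{(2)}_{j+1}\subseteq I^{(1)}_i$, is likewise symmetric (with the roles of the two indices interchanged). Hence Theorem~\ref{average:proportion:nonextant} applies verbatim to $\Pi^{(2)}$, yielding an asymptotic proportion of $(b/(a+b))^3$ of nonextant points lying in $\Pi^{(2)}$, corresponding to $BBB$-substrings.

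Finally, because the two independent homogeneous Poisson processes have disjoint points with probability $1$, the events ``nonextant in $\Pi^{(1)}$'' and ``nonextant in $\Pi^{(2)}$'' partition the set of nonextant points among $\Pi^{(1)}\cup\Pi^{(2)}$. Therefore the cumulative proportion is the sum of the two contributions, giving
\[
f(a,b)=\left(\tfrac{a}{a+b}\right)^3+\left(\tfrac{b}{a+b}\right)^3,
\]
as claimed. The border effects, being of order $O(1)$, are absorbed in the limit $T\to\infty$ since the total number of points (and overlaps) grows like $T(a+b)$ by Lemma~\ref{lemma1}.

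The only delicate point, and arguably the main thing to check carefully, is the symmetry step: one must verify that Theorem~\ref{average:proportion:nonextant}, stated only for the $\Pi^{(1)}$-side, transports cleanly under the interchange $(\Pi^{(1)},\Pi^{(2)},a,b)\mapsto(\Pi^{(2)},\Pi^{(1)},b,a)$. Since both the adversary and the nonextantness criterion are symmetric in the two sequences, this transport is immediate and no new calculation is needed.
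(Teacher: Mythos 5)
Your argument is correct and is precisely the intended derivation: the paper states Corollary~\ref{corollary1} without a separate proof because it follows immediately from Theorem~\ref{average:proportion:nonextant} applied to $\Pi^{(1)}$ and, by the evident $A\leftrightarrow B$ symmetry, to $\Pi^{(2)}$, with the two disjoint contributions summed. No gap; your symmetry and disjointness observations are exactly what the paper implicitly relies on.
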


  \begin{thmx}
    Let $f(a,b) = (\frac{a}{a+b})^3 + (\frac{b}{a+b})^3,$ where $a>0, b>0.$ Then $f(a,b)$
    attains the global minimum value $f(a,b) = 1/4$ for equal rates, $a=b$. 
    \end{thmx}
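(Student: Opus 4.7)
The first reduction is to observe that $f(a,b)$ depends on $a$ and $b$ only through the ratio $p:=a/(a+b)$, since then $b/(a+b)=1-p$. So the problem reduces to minimizing the single-variable function $g(p):=p^3+(1-p)^3$ over $p\in(0,1)$.

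My preferred route avoids calculus entirely by exploiting the algebraic identity $x^3+y^3=(x+y)^3-3xy(x+y)$. Setting $x=p$, $y=1-p$ gives $x+y=1$, so
\[
g(p)\;=\;1\;-\;3\,p(1-p).
\]
Thus minimizing $g$ is the same as maximizing $p(1-p)$ on $(0,1)$. By the AM--GM inequality (or by noting that $p(1-p)=\tfrac14-(p-\tfrac12)^2$), we have $p(1-p)\le \tfrac14$ with equality if and only if $p=\tfrac12$. Substituting back gives $g(p)\ge 1-3\cdot\tfrac14=\tfrac14$, attained exactly at $p=\tfrac12$, which corresponds to $a=b$.

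As a sanity check (and as an alternative proof), a direct calculation gives $g'(p)=3p^2-3(1-p)^2=3(2p-1)$, whose unique zero in $(0,1)$ is $p=\tfrac12$, and $g''(p)=6>0$, so $g$ is strictly convex on $(0,1)$ and $p=\tfrac12$ is its unique global minimizer; evaluating yields $g(\tfrac12)=2\cdot(\tfrac12)^3=\tfrac14$.

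\textbf{Main obstacle.} There is essentially no obstacle here: the statement is a one-variable optimization after the homogeneity reduction, and both the identity-based argument and the derivative argument are short. The only point worth stating carefully is the reduction to one variable, i.e.\ the observation that $f$ is scale-invariant in $(a,b)$ so that fixing the normalization $a+b=1$ loses no generality; after that, either AM--GM on $p(1-p)$ or convexity of $g$ closes the proof immediately.
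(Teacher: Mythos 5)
Your proof is correct and follows essentially the same route as the paper: both reduce to the single variable $t=a/(a+b)$, arrive at the form $1-3t(1-t)$, and locate the minimum of the resulting quadratic at $t=1/2$, yielding the value $1/4$ at $a=b$. The only cosmetic difference is that you derive $1-3t(1-t)$ via the identity $x^3+y^3=(x+y)^3-3xy(x+y)$ rather than by simplifying $\frac{a^3+b^3}{(a+b)^3}$ directly, which does not change the argument.
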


    \begin{proof}
      Given $a>0$ and $b>0$, we easily obtain $f(a,b)>0.$  After algebraic manipulation, we can rewrite
      $f(a,b) = \frac{\left(a^2-ab+b^2\right)}{(a+b)^2} = 1 -\frac{3ab}{\left(a+b\right)^2}.$
      Now, express $f(a,b)$ as $g(t)$ via the following substitution $t=\frac{a}{a+b}$;
      hence, $(1-t) = \frac{b}{a+b}.$ We obtain $g(t) = 1- 3t\left(1-t\right).$
      Further, after completing the square, we obtain $g(t) = 3t^2-3t+1,$
      which is a parabola with positive coefficient of $t^2.$ Therefore,
      its vertex represents the global minimum at $t_{\text{vertex}} = \frac{1}{2}.$
      Further, this value implies that the global minimum is attained
      when $a = b,$ since $t=1/2=\frac{a}{(a+b)}$ and $1-t=1-1/2=\frac{b}{(a+b)}.$
      Evaluating $g(t)$ at $t_{\text{vertex}}$, we obtain the global minimum value of $f(a,b)$
      via $g(\frac{1}{2}) = 1- 3\left(\frac{1}{2}\right)\left(1-\frac{1}{2}\right)= \frac{1}{4}.$  
    \end{proof}

\subsection{Algorithms}
This section includes the relevant material for the algorithms. We outline
the implementation for counting the exact number of nonextant data points,which is based on
proof of theorem~\ref{theo8}. The next section proposes a mergesort algorithm.

\subsubsection{Algorithm counting nonextant data points}

The algorithm~\ref{algo1} computes nonextant data points excluding boundary points,
$t^{(2)}_1$ and $t^{(2)}_{M^{(2)} - 1}.$ To determine whether these two points are
\textit{nonextant}, we have for $j=1$,

 $\left(I^{(2)}_1\cup I^{(2)}_{2} \right) \cap I^{(1)}_i \neq \emptyset \,
 \text{and}\, \forall_{k \neq i} \left( (I_{1}^{(2)} \cup I_{2}^{(2)}) \cap I_{k}^{(1)} = \emptyset \right),$ for
 some $i.$ Analogous consideration for $j=M^{(2)}-1.$

\newcommand{\mycomment}[1]{\hfill\textcolor{gray}{\textit{// #1}}}

\begin{algorithm}[H]
\caption{Counting Nonextant Data Points in $\Pi^{(2)}$ given inputs: $\Pi^{(1)}$ and $\Pi^{(2)}$}
\label{algo1}
\begin{algorithmic}[1]
\Procedure{CountNonextantDataPoints}{$\Pi^{(2)}, \Pi^{(1)}$}
\State $c \gets 0 $ \mycomment{initialize counter for nonextant data points in $\Pi^{(2)}$}
\State $m \gets length.(\Pi^{(1)})-1$ \mycomment{\# of intervals in $I^{(1)}_i$; $\Pi^{(1)}$ indexed from $0$  to $M^{(1)}$}
\State $n \gets length.(\Pi^{(2)})-1$ \mycomment{\# of intervals in $I^{(2)}_j$; $\Pi^{(2)}$ indexed from $0$  to $M^{(2)}$}
    \For{$j \gets 2$ \textbf{to} $n-2$}
        \State $I_{j}^{(2)} \gets$ $\left(\Pi_{j-1}^{(2)}, \Pi_{j}^{(2)}\right]$
        \State $I_{(j+1)}^{(2)} \gets \left(\Pi_{j}^{(2)}, \Pi_{j+1}^{(2)}\right]$
        \For{$i \gets 1$ \textbf{to} $m$}
            \State $I_{i}^{(1)} \gets \left(\Pi_{i-1}^{(1)}, \Pi_{i}^{(1)}\right]$
            \If{$\left(I_{j}^{(2)} \cup I_{(j+1)}^{(2)}\right) \subseteq I_{i}^{(1)}$}
                \State $c \gets c + 1$ 
            \EndIf
        \EndFor
        \EndFor
    \State \textbf{return} c
\EndProcedure
\end{algorithmic}
\end{algorithm}

\subsubsection{Counting nonextant data points via mergesorting observation times}

This algorithm harnesses the property that inputs need to possess to generate {\it nonextant data points}.
It is equivalent to counting the number of subsequences `AAA' or `BBB' (apart from initial boundary conditions, `ABAA' or `BABB',
and terminal boundary condition, `AABA' and `BBAB') while mergesorting $\Pi^{(1)}$ and $\Pi^{(2)}$,
whose inputs are naturally sorted in ascending order and have accordingly been labelled with $A$
(for inputs in $\Pi^{(1)}$) and $B$ (for inputs in $\Pi^{(2)}$),
respectively. This can be done in $\bigO(n)$ time, where $n=|\Pi^{(1)}| + |\Pi^{(2)}|.$

\subsubsection{Counting nonextant data points given that entire observation sequence is provided}

Alternatively, if we are provided with the entire input sequence in terms of A's and B's, then counting nonextant data
points appears to be a routine application of the Knuth-Morris-Pratt or KMP algorithm~\cite{KMP-1977}.

\subsubsection{Simulation Algorithm for (a,b)-asynchronous adversary}\label{simulation:algo}

In order to generate inputs via our $(a,b)$-asynchronous adversary, we need to
implement a simple algorithm that generates data points for observation times from 
a homogeneous Poisson process with intensity or rate $a$. Our implementation 
harnesses the fact that times between successive events for a homogeneous Poisson 
process are independent exponential random variables each with rate $a.$
We devise a variant of Ross's implementation for generating the first $T$ time
units of two independent homogenous Poisson processes with rates $a$ and $b,$ respectively,
as illustrated in~\cite[p.~72]{R-1990}.

\subsection{Computational Exploration}\label{eda} 

We employ the algorithm in~\ref{simulation:algo} to generate observation times for 
$\Pi^{(1)}$ and $\Pi^{(2)}$ for two independent homogenous Poisson processes 
with varying rates, $r.$
Table~1 shows computed values for the {\it data loss}, which we have
previously defined to be the ratio of the cumulative, ($\Pi^{(1)}$ and $\Pi^{(2)}$),
{\it number of nonextant points} over the {\it number of overlaps}. All entries are provided in the form
$\mu \pm \sigma,$ where $\mu$ is the sample mean and $\sigma$ is an estimate
of the standard deviation. The number of test runs for each instance or experiment
was 1000.

\bigskip
\centerline{Table 1.\quad Empirical data point loss, {\it f}/{\it m},}
\centerline{for inputs $\Pi^{(1)}$ and $\Pi^{(2)}$ generated by}
\centerline{$(a,b)$-asynchronous adversary over $\l(0,T]$,}
\centerline{compared to theoretical values via $f(a,b)$}
\centerline{(excluding boundary points.)}
$$\vcenter{\halign{$#$\hfil\quad
&$#$\hfil\quad&$#$\hfil\quad&$#$\hfil\quad&$#$\hfil\quad&$#$\hfil\cr
&\hfil r=1,1&\hfil r=1,{1\over2}&\hfil r=1,{1\over4}&\hfil r=1, {1\over 10}
&\cr
\noalign{\smallskip}
T=100&\phantom{.}0.249\pm0.040&\phantom{.}0.330\pm0.055&\phantom{.}0.512\pm0.072& 0.742\pm0.074 &\cr%
T=1000&\phantom{.}0.250\pm0.012 &\phantom{.} 0.333\pm0.018&\phantom{.}0.519\pm0.023 %
&0.751\pm 0.020&\cr
T=10000&\phantom{.}0.250\pm0.004&\phantom{.}0.333\pm0.006&\phantom{.}0.520 \pm 0.007 &0.752\pm0.007&\cr
T=100000&\phantom{.}0.250\pm0.001&\phantom{.}0.333\pm 0.002&\phantom{.}0.520\pm0.002& 0.752\pm 0.002&\cr
\text{Theoretical:}&f(1,1)=1/4&f(1,\frac{1}{2})=1/3&f(1,\frac{1}{4})=\frac{13}{25}& f(1,\frac{1}{10})=\frac{91}{121} &\cr
}}$$

\section*{Acknowledgement}
I heartily thank Svante Janson for extensive comments, insightful feedback,
assistance in formalizing the proposition as well as correcting errors in earlier drafts,
and I am greatly indebted to Daniel M. Kane for helping narrow the initial
scope of the problem statement, sharpening the precision of definitions and proposition,
as well as extensive comments that helped enhance the clarity and readability
of previous versions.


\appendix

\section*{Appendix}

\subsection{Computation I}\label{appendix:1}
\begin{footnotesize}
\begin{align}
{\left<P_{\Pi^{(1)}}^{(1)}, P_{\Pi^{(2)}}^{(2)}\right>}_{{\text{\HY}}}&:= \sum_{j=1}^{5} \sum_{i=1}^{6} \left( P_{t_i^{(1)}}^{(1)} -  P_{t_{i-1}^{(1)}}^{(1)} \right) \left(P_{t_j^{(2)}}^{(2)} - P_{t_{j-1}^{(2)}}^{(2)}\right)\mathds{1}_{\left\{ \l(t_{i-1}^{(1)},t_{i}^{(1)} ] \bigcap \l(t_{j-1}^{(2)}, t_{j}^{(2)}]\neq\emptyset \right\}} \label{async2:appendix} \\
&\, \text{factoring sequentially, via $(P_{t_4^{(1)}}^{(1)} - P_{t_{3}^{(1)}}^{(1)})$, and pairwise encircling cancellations}  \nonumber \\
&\, = \left(P_{t_1^{(2)}}^{(2)} - P_{t_{0}^{(2)}}^{(2)}\right) \left[ \left(\text{\encircle{$P_{t_1^{(1)}}^{(1)}$}} -  P_{t_{0}^{(1)}}^{(1)} \right) + \left(\text{\encircled{$P_{t_2^{(1)}}^{(1)}$}} -  \text{\encircle{$P_{t_{1}^{(1)}}^{(1)}$}}\right) + \left(P_{t_3^{(1)}}^{(1)} - \text{\encircled{$P_{t_{2}^{(1)}}^{(1)}$}}\right)  \right] \nonumber \\
&\, + \left( P_{t_4^{(1)}}^{(1)} -  P_{t_{3}^{(1)}}^{(1)} \right) \left[ \left(\text{\encircle{$P_{t_1^{(2)}}^{(2)}$}} - P_{t_{0}^{(2)}}^{(2)}\right)  + \left(P_{t_2^{(2)}}^{(2)} - \text{\encircle{$P_{t_{1}^{(2)}}^{(2)}$}}\right)  \right] \nonumber \\
&\, + \left(P_{t_2^{(2)}}^{(2)} - P_{t_{1}^{(2)}}^{(2)}\right) \left[  \left( \text{\encircle{$P_{t_5^{(1)}}^{(1)}$}} -  P_{t_{4}^{(1)}}^{(1)}\right) + \left(P_{t_6^{(1)}}^{(1)} -  \text{\encircle{$P_{t_{5}^{(1)}}^{(1)}$}}\right)  \right] \nonumber \\
&\, + \left( P_{t_6^{(1)}}^{(1)} -  P_{t_{5}^{(1)}}^{(1)}\right) \left[  \left(\text{\encircle{$P_{t_3^{(2)}}^{(2)}$}} - P_{t_{2}^{(2)}}^{(2)}\right) + \left(\text{\encircled{$P_{t_4^{(2)}}^{(2)}$}} - \text{\encircle{$P_{t_{3}^{(2)}}^{(2)}$}}\right) + \left(P_{t_5^{(2)}}^{(2)} - \text{\encircled{$P_{t_{4}^{(2)}}^{(2)}$}}\right)\right] \nonumber \\
\end{align}
\end{footnotesize}

Further tidying up, we obtain,
\begin{tiny}
\begin{equation}\label{new}
\left(P_{t_1^{(2)}}^{(2)} - P_{t_{0}^{(2)}}^{(2)}\right) \left( P_{t_3^{(1)}}^{(1)}  - P_{t_{0}^{(1)}}^{(1)}  \right) + \left( P_{t_4^{(1)}}^{(1)} -  P_{t_{3}^{(1)}}^{(1)} \right) \left( P_{t_2^{(2)}}^{(2)} -  P_{t_{0}^{(2)}}^{(2)} \right)  + \left(P_{t_2^{(2)}}^{(2)} - P_{t_{1}^{(2)}}^{(2)}\right) \left(P_{t_6^{(1)}}^{(1)} - P_{t_{4}^{(1)}}^{(1)}\right) + \left( P_{t_6^{(1)}}^{(1)} -  P_{t_{5}^{(1)}}^{(1)}\right)  \left(  P_{t_{5}^{(2)}}^{(2)} -  P_{t_{2}^{(2)}}^{(2)}\right)
\end{equation}
Via this sequence of computation we note four cancellations or nonextant data points.
However, there are 4 product summands.
Obvious algebraic manipulation shows that we can derive (\ref{old}).
\end{tiny}

\end{document}